\documentclass[11pt,a4paper,twoside]{article}
\usepackage[T1]{fontenc}
\usepackage[margin=2.8cm,headheight=14pt,headsep=18pt]{geometry}
\usepackage{amsmath,amssymb,amsthm,mathtools}
\usepackage{graphicx,booktabs,array,microtype,fancyhdr}
\usepackage[font=small,labelfont=bf,hypcap=false]{caption}
\usepackage{enumitem}
\usepackage{xurl}
\usepackage[hidelinks]{hyperref}
\hypersetup{pdftitle={Marginal Log-Likelihood Increments under Dirichlet-Smoothed Markov Estimation},pdfauthor={Levin David Schwab},pdfcreator={},pdfproducer={},pdfsubject={},pdfkeywords={}}
\setlist{nosep,leftmargin=1.5em}
\theoremstyle{plain}
\newtheorem{proposition}{Proposition}[section]

\newtheorem{corollary}[proposition]{Corollary}
\theoremstyle{definition}

\numberwithin{equation}{section}
\newcommand{\E}{\mathbb E}
\newcommand{\Var}{\operatorname{Var}}
\newcommand{\Cov}{\operatorname{Cov}}
\newcommand{\TV}{\operatorname{TV}}
\newcommand{\osc}{\operatorname{osc}}
\newcommand{\KL}{D}
\newcommand{\ind}{\mathbf 1}
\newcommand{\R}{\mathbb R}
\newcommand{\note}[1]{\par\smallskip\noindent #1\par\smallskip}
\begin{document}
\raggedbottom
\begin{center}
{\Large\bfseries Marginal Log-Likelihood Increments\\[2pt] under Dirichlet-Smoothed Markov Estimation\par}
\vspace{6pt}
{\large Exact trace valuation, attainable gain, and budgeted selection\par}
\vspace{12pt}
{\large Levin David Schwab\par}
\vspace{4pt}
21 September 2026
\end{center}
\thispagestyle{plain}
\vspace{3pt}
\begin{abstract}
\noindent For a Dirichlet-smoothed transition model, the effect of adding one workflow trace to the training archive is an exact change in reference-weighted log likelihood. We derive that change and show that it is a weighted reduction of Kullback--Leibler divergence between the reference conditionals and the model. From this form we obtain an upper bound on the gain available to any acquisition, which expresses a millinat difference as a share of what is attainable, an exact covariance identity for the effect of the reference weighting, and a sign criterion for the interaction between two candidates, from which the batch objective is neither submodular nor supermodular. A case study on the BPI Challenge 2012 loan-application log measures all three and finds a positive selection result in one of the four combinations of reference weighting and budget unit. There, of two regressors fitted to identical descriptors and identical labels, the one that predicts individual increments far more accurately, median $R^2$ 0.87 against 0.62, realizes the smaller share of the attainable gain, 61 against 69 per cent, so ranking accuracy for individual traces is neither necessary nor sufficient for batch quality.
\end{abstract}

\section{Introduction}
A workflow trace records the activities of one recorded case, such as a loan application. Suppose an archive already trains a model to predict the next recorded activity. An additional trace is useful if adding it reduces prediction loss on the workload against which the model is evaluated.

The practical question is whether inexpensive descriptions of a trace can identify useful additions under a limited acquisition budget. A useful answer must distinguish three claims. First, a descriptor may predict a trace's marginal contribution for a fixed archive and evaluation population. Second, that prediction may remain useful after the evaluation period changes. Third, selecting a batch by individual predictions may improve the model. A priori none of these claims implies the next.

The learner studied here retains only the current activity and is estimated by counting transitions. The simplicity is deliberate: it makes every marginal contribution exactly calculable and removes valuation noise as an explanation for failure. It also limits the conclusions, since a first-order learner represents no long-range dependence.

\section{Background}\label{sec:background}
\emph{Event logs.} An event log is a table produced as a by-product of a business information system. Each row records that a named activity occurred in a named case at a given time. Grouping the rows by case and ordering them by time turns the log into a finite set of finite sequences over a finite alphabet of activities, and one such sequence is a trace. The log used here has 164,506 retained events in 13,087 cases over an alphabet of 23 activities. A log is an observed record rather than a sample from a postulated process, and the only structure used below is the ordering of activities within a case. Selecting a subset of such a log for training has been studied under other objectives \cite{fanisani2023}.

\emph{The learner.} The prediction task is the next recorded activity given the current one, one of the tasks collected under predictive process monitoring \cite{teinemaa2019}. A first-order Markov model assigns to each activity $x$ a distribution over its successors, so the parameter is a row-stochastic $K\times K$ matrix consisting of one categorical distribution per row, and the rows are estimated independently. For multinomial observations the maximum-likelihood estimate of a row is the vector of relative frequencies $N(x,y)/N_x$.

\emph{Smoothing.} Relative frequencies are unusable under a logarithmic loss. A successor never observed after $x$ receives probability zero, and a single occurrence of that transition in the evaluation data makes the loss infinite; rows supported on few observations are unstable for the same reason. Adding a constant $\alpha>0$ to every cell removes both defects and has an exact Bayesian reading. If each row carries an independent symmetric Dirichlet$(\alpha,\ldots,\alpha)$ prior and the successors of $x$ are multinomial, the posterior for row $x$ is Dirichlet$(N(x,\cdot)+\alpha)$, and its posterior predictive distribution is the estimator \eqref{eq:model} below. We take $\alpha=1/2$, the Jeffreys prior for the multinomial, which is also the Krichevsky--Trofimov estimator of universal coding \cite{krichevsky1981}.

\emph{The evaluation.} Performance is the expected log probability that the model assigns to transitions drawn from a reference distribution $q$ on pairs $(x,y)$. As a loss this is the cross entropy of $q$ relative to the model, reported in nats, the unit of information belonging to base $e$, one nat being $1/\log 2\approx1.443$ bits, which is the natural scale here because every increment below is a difference of logarithms of count ratios and so carries no conversion constant. Two properties of this convention matter here. The reference $q$ is a choice and not a property of the data, since the same held-out cases admit several reference distributions, and Section~\ref{sec:weighting} shows that the choice reorders the candidates. Log loss is also the only common evaluation under which an assigned probability of zero is inadmissible rather than merely poor, which is what makes the smoothing constant part of the model.

\emph{The quantity.} Data valuation asks what a single training record is worth to a specified learner and task. The established answers are coalitional: Data Shapley averages a record's marginal contribution over subsets of the training set \cite{ghorbani2019}, and influence functions approximate the effect of removing it \cite{koh2017}. Both require retraining or approximation, so the target of any prediction is itself known only up to noise. The quantity studied here is the forward difference instead. Hold the archive fixed, add the transition counts of one trace, and record the change in the reference-weighted log likelihood. Exact computability is the reason for choosing a first-order learner. It removes valuation error as an explanation for whatever the selection experiments show, at the cost of a model that represents no long-range dependence.

\emph{The experiment in outline.} Four disjoint groups of cases are used and should not be confused. The \emph{archive} is the set of cases the learner has already counted; it is held fixed throughout. The \emph{candidate pool} is the set of cases that might be acquired, of which an acquisition rule may take a limited number. A \emph{calibration reference} is a set of held-out cases used to compute the increments that a scoring rule is allowed to see, and a \emph{test reference} is a disjoint set used only to report results. Cases are assigned to these roles at random, and the assignment is repeated five times; we call one such assignment a \emph{role allocation} and report the spread across the five, since a single allocation reveals nothing about stability. A \emph{budget} caps what an acquisition rule may take, either in whole cases or in recorded transitions, and the two units are not interchangeable.

\section{Objects and valuation target}\label{sec:objects}
\subsection{From traces to transition counts}
Let $d=(a_1,\ldots,a_L)$ be a trace with $L\geq2$. Its count matrix is
\begin{equation}
 C_d(x,y)=\sum_{t=1}^{L-1}\ind\{a_t=x,\ a_{t+1}=y\},
 \qquad T_d=L-1=\sum_{x,y}C_d(x,y).
\end{equation}
For an archive $B$, write $N=\sum_{d\in B}C_d$ and $N_x=\sum_yN(x,y)$. The row for $x$ records which activities followed $x$. With $K$ possible output labels and a smoothing constant $\alpha>0$, the predictor is
\begin{equation}\label{eq:model}
 p_N(y\mid x)=\frac{N(x,y)+\alpha}{N_x+K\alpha}.
\end{equation}
As described in Section~\ref{sec:background}, the $\alpha$ pseudo-counts keep every transition at positive probability and make \eqref{eq:model} the posterior predictive distribution of a row-wise Dirichlet prior. We use the estimator without assuming that the underlying workflow is a stationary Markov process.

\subsection{Reference distributions}\label{sec:references}
A reference distribution $q(x,y)\geq0$ with $\sum_{x,y}q(x,y)=1$ describes the transitions on which the model is evaluated. Define
\begin{equation}\label{eq:utility}
 U_q(N)=\sum_{x,y}q(x,y)\log p_N(y\mid x),\qquad
 \ell_q(N)=-U_q(N).
\end{equation}
Higher $U_q$ means lower log loss. Logarithms are natural, so the unit is a nat; one millinat is $10^{-3}$ nats. For a nonempty finite reference set $R$, two weightings are natural:
\begin{align}
 q_{\mathrm c}(x,y)&=\frac1{|R|}\sum_{e\in R}\frac{C_e(x,y)}{T_e},\label{eq:case}\\
 q_{\mathrm t}(x,y)&=\frac{\sum_{e\in R}C_e(x,y)}{\sum_{e\in R}T_e}.\label{eq:transition}
\end{align}
Equation~\eqref{eq:case} samples a case uniformly and then a transition within it. Equation~\eqref{eq:transition} samples uniformly from all recorded transitions. If $R$ contains one two-transition case and one ten-transition case, the two carry weights $1/2$ and $1/2$ under $q_{\mathrm c}$ but $1/6$ and $5/6$ under $q_{\mathrm t}$. Neither objective is intrinsically preferable. The choice records whether errors matter per case or per recorded step, and Section~\ref{sec:weighting} quantifies its effect.

\section{An exact expression for individual value}
Hold $N$, $q$, and $\alpha$ fixed. The marginal increment is
\begin{equation}\label{eq:value}
 v_q(d\mid N)=U_q(N+C_d)-U_q(N).
\end{equation}
A positive value means that adding $d$ reduces the reference log loss. This is a fixed-archive increment, not an average over coalitions of training examples as in Data Shapley \cite{ghorbani2019}.

Write $A_{xy}=N(x,y)+\alpha$, $A_x=N_x+K\alpha$, $c_{xy}=C_d(x,y)$, $c_x=\sum_yc_{xy}$, and $q_x=\sum_yq(x,y)$. The ratio of updated to original predictions is
\begin{equation}
 \frac{p_{N+C_d}(y\mid x)}{p_N(y\mid x)}
 =\frac{A_{xy}+c_{xy}}{A_{xy}}\cdot\frac{A_x}{A_x+c_x}
 =\frac{1+c_{xy}/A_{xy}}{1+c_x/A_x}.
\end{equation}
Taking logarithms and averaging under $q$ gives the exact increment
\begin{equation}\label{eq:exact}
 v_q(d\mid N)=\sum_{x,y}q(x,y)\log\!\left(1+\frac{c_{xy}}{A_{xy}}\right)
 -\sum_xq_x\log\!\left(1+\frac{c_x}{A_x}\right).
\end{equation}
The first term rewards counts added to particular transitions and is large when a transition matters under $q$ but is rare in the archive. The second accounts for row normalization: counts added to one outcome reduce the probability assigned to the others in that row. The two terms have opposite signs and their balance determines the sign of the increment. Additional data can increase the loss when the new counts move predictions away from the reference.

\subsection{Divergence form and the attainable gain}
Equation~\eqref{eq:exact} is convenient for computation but uninformative about magnitude. A second form serves that purpose. For every row $x$ with $q_x>0$ write $\rho_x(y)=q(x,y)/q_x$ for the reference conditional, $H(\rho_x)$ for its entropy, and $\bar H_q=\sum_xq_xH(\rho_x)$.

Throughout, $v_q(C\mid N)=U_q(N+C)-U_q(N)$ also denotes the increment of an arbitrary nonnegative count matrix $C$, of which \eqref{eq:value} is the case $C=C_d$. Sums over rows are understood to run over $\{x:q_x>0\}$, on which $\rho_x$ is defined.

\begin{proposition}[Divergence form]\label{prop:kl}
For every nonnegative archive $N$ and every reference $q$,
\begin{equation}\label{eq:lossdecomp}
 \ell_q(N)=\bar H_q+\sum_{x}q_x\,\KL\!\left(\rho_x\,\|\,p_N(\cdot\mid x)\right),
\end{equation}
and for every nonnegative count matrix $C$,
\begin{equation}\label{eq:vkl}
 v_q(C\mid N)=\sum_xq_x\Big[\KL\!\left(\rho_x\,\|\,p_N(\cdot\mid x)\right)-\KL\!\left(\rho_x\,\|\,p_{N+C}(\cdot\mid x)\right)\Big].
\end{equation}
\end{proposition}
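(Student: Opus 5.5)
The plan is to prove \eqref{eq:lossdecomp} by a row-by-row rearrangement of the definition \eqref{eq:utility}, and then to obtain \eqref{eq:vkl} by subtracting two instances of \eqref{eq:lossdecomp}.

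\emph{Decomposing the loss.} First, I restrict the double sum in $\ell_q(N)=-\sum_{x,y}q(x,y)\log p_N(y\mid x)$ to rows with $q_x>0$. This is legitimate because $q(x,y)\geq0$ and $\sum_yq(x,y)=q_x$, so $q_x=0$ forces every $q(x,y)=0$ in that row, and those terms vanish. On the remaining rows I substitute $q(x,y)=q_x\rho_x(y)$, which gives
\[
\ell_q(N)=\sum_x q_x\Big(-\sum_y\rho_x(y)\log p_N(y\mid x)\Big).
\]
Inside each row I add and subtract $\sum_y\rho_x(y)\log\rho_x(y)$, using the convention $0\log0=0$. This splits the inner cross entropy as $H(\rho_x)+\KL(\rho_x\,\|\,p_N(\cdot\mid x))$. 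Summing with weights $q_x$ then yields $\bar H_q$ plus the weighted divergences, which is \eqref{eq:lossdecomp}.

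\emph{Finiteness.} The only point needing care is that every quantity in this decomposition is finite, so that the add-and-subtract step is valid. By \eqref{eq:model}, $\alpha>0$ and $N\geq0$ give $p_N(y\mid x)\geq\alpha/(N_x+K\alpha)>0$ for every $y$. Hence each $\log p_N(y\mid x)$ is finite and each divergence is finite. Terms with $\rho_x(y)=0$ contribute zero to both the entropy and the divergence, so nothing undefined appears on either side. I expect this to be the main, though mild, obstacle: one has to state clearly that the identity is an equality of finite real numbers, rather than a formal manipulation involving $\infty-\infty$.

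\emph{The increment.} For \eqref{eq:vkl}, I note that $N+C$ is again a nonnegative count matrix, so \eqref{eq:lossdecomp} holds with $N+C$ in place of $N$. Writing
\[
v_q(C\mid N)=U_q(N+C)-U_q(N)=\ell_q(N)-\ell_q(N+C)
\]
and subtracting the two decompositions, the term $\bar H_q$ cancels, because it depends only on $q$ and not on the counts. What remains is exactly the weighted difference of divergences in \eqref{eq:vkl}.
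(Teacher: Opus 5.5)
Your proof is correct and follows the paper's own argument: group the loss by rows, add and subtract $\sum_y\rho_x(y)\log\rho_x(y)$ in each row, then subtract the instances at $N$ and $N+C$ so that $\bar H_q$ cancels. Your extra checks are points the paper leaves implicit: you drop rows with $q_x=0$, and you use $p_N(y\mid x)\geq\alpha/(N_x+K\alpha)>0$ to show that every term is finite.
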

\begin{proof}
Group the sum in \eqref{eq:utility} by rows: $\ell_q(N)=-\sum_xq_x\sum_y\rho_x(y)\log p_N(y\mid x)$. Adding and subtracting $\sum_y\rho_x(y)\log\rho_x(y)$ inside each row gives \eqref{eq:lossdecomp}. Equation~\eqref{eq:vkl} is the difference of two instances of \eqref{eq:lossdecomp}, in which the entropy term cancels because it does not depend on $N$.
\end{proof}

A trace therefore helps exactly to the extent that it moves the row predictors toward the reference conditionals in the $q$-weighted average. Since divergences are nonnegative, the decomposition also limits what acquisition can achieve.

\begin{corollary}[Attainable gain]\label{cor:ceiling}
Let $\Lambda_q(N)=\sum_xq_x\KL(\rho_x\|p_N(\cdot\mid x))$. Then $v_q(C\mid N)\leq\Lambda_q(N)$ for every nonnegative $C$, and
$\sup_{C}v_q(C\mid N)=\Lambda_q(N)$, the supremum being over nonnegative integer count matrices. It is attained only in the exceptional case that some admissible $C$ makes $p_{N+C}(\cdot\mid x)=\rho_x$ for every row with $q_x>0$.
\end{corollary}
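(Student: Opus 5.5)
The plan is to derive all three claims from the divergence form \eqref{eq:vkl} of Proposition~\ref{prop:kl}.

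\emph{Upper bound.} Since each $\KL(\rho_x\|p_{N+C}(\cdot\mid x))$ is nonnegative, dropping the subtracted term in \eqref{eq:vkl} gives $v_q(C\mid N)\le\Lambda_q(N)$ at once. Equality holds exactly when $\sum_xq_x\KL(\rho_x\|p_{N+C}(\cdot\mid x))=0$. Because every $q_x$ in the sum is positive, this means $p_{N+C}(\cdot\mid x)=\rho_x$ for every row with $q_x>0$. That gives the attainment clause.

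\emph{Supremum.} I need integer matrices $C^{(n)}$ with $\KL(\rho_x\|p_{N+C^{(n)}}(\cdot\mid x))\to0$ in every row. The construction is row by row. For each $x$ with $q_x>0$ and an integer $n$, set
\[
C^{(n)}(x,y)=\max\bigl\{0,\lceil n\rho_x(y)\rceil-N(x,y)\bigr\},
\]
and put zero rows elsewhere. Then $N(x,y)+C^{(n)}(x,y)=n\rho_x(y)+O(1)$, where the $O(1)$ error is bounded uniformly in $n$ by $1+\max N(x,y)$. Dividing by the row total, which is $n+O(1)$, gives $p_{N+C^{(n)}}(y\mid x)\to\rho_x(y)$ for every $y$. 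This includes the $y$ with $\rho_x(y)=0$, where the probability is $O(1/n)\to0$.

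To turn this into convergence of the divergence, I will use that $\KL(\rho_x\|\cdot)$ is continuous at $\rho_x$ on the open simplex. Only the coordinates with $\rho_x(y)>0$ enter, and there the predicted probabilities stay bounded away from zero for large $n$, since they converge to $\rho_x(y)>0$. Summing finitely many rows weighted by $q_x$ shows that $v_q(C^{(n)}\mid N)\to\Lambda_q(N)$. Combined with the upper bound, this identifies the supremum.

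\emph{Where care is needed.} The main obstacle is handling the entries with $\rho_x(y)=0$. The prediction there can never equal zero because $\alpha>0$, which is why exact attainment is exceptional. Those entries contribute nothing to $\KL(\rho_x\|\cdot)$ directly, but they absorb mass through the normalization. The construction above does not add counts to them, so their share is $(N(x,y)+\alpha)/(n+O(1))\to0$, and the limit argument goes through. I also need to check that rows with $q_x=0$ are irrelevant: they do not appear in $\Lambda_q$, and leaving them unchanged affects nothing in the sum.
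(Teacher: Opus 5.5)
Your proposal is correct and follows essentially the paper's route: the bound and the equality clause come from nonnegativity of $\KL$ in \eqref{eq:vkl}, and the supremum comes from an explicit integer matrix roughly proportional to $n\rho_x$ on the rows with $q_x>0$. The paper takes $\lfloor M\rho_x(y)\rfloor$ without subtracting the existing counts, which works equally well because $N$ is fixed, and your continuity step, restricted to coordinates with $\rho_x(y)>0$ where the limits are positive, is stated more carefully than the paper's.
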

\begin{proof}
The inequality follows from \eqref{eq:vkl} and $\KL\geq0$, with equality only when the second divergence vanishes in every row of positive weight. For the supremum, take $C_M(x,y)=\lfloor M\rho_x(y)\rfloor$ on rows with $q_x>0$ and zero elsewhere. Its row sums satisfy $M-K<c_x\leq M$, so $p_{N+C_M}(y\mid x)\to\rho_x(y)$ as $M\to\infty$. Because smoothing keeps every $p_{N+C_M}(y\mid x)$ bounded away from zero for fixed $M$, and the limit is approached uniformly on the finite label set, $\KL(\rho_x\|p_{N+C_M}(\cdot\mid x))\to0$; hence $v_q(C_M\mid N)\to\Lambda_q(N)$.
\end{proof}

Write $\eta_q(S\mid N)=J_q(S\mid N)/\Lambda_q(N)$ for the \emph{acquisition efficiency} of a batch $S$, with $J_q$ the joint gain defined in \eqref{eq:batch}. The efficiency expresses an improvement relative to what was available, which a raw millinat difference does not. The supremum in Corollary~\ref{cor:ceiling} is taken over arbitrary nonnegative count matrices, not over batches that a budget admits or that any set of traces can realize, so $\Lambda_q(N)$ is an upper bound and not the value of an attainable optimum. The ceiling $\Lambda_q(N)$ is also computed from an empirical reference, so it bounds the stated objective on that reference set rather than on the underlying population. Section~\ref{sec:ceilingresults} reports both quantities for BPI 2012, where $\bar H_q$ accounts for three quarters of the initial loss.

\subsection{A two-outcome example}
Fix the two-label alphabet $\{b,c\}$, the active row $x=b$, $\alpha=1/2$, and counts $N(x,b)=0$, $N(x,c)=2$. The initial probabilities under the Dirichlet-smoothed estimator \eqref{eq:model} are $(1/6,5/6)$. A one-transition trace $(x,b)$ changes them to $(3/8,5/8)$. With $q(x,b)=r$ and $q(x,c)=1-r$,
\begin{equation}\label{eq:example}
 v(r)=r\log(9/4)+(1-r)\log(3/4)=r\log3+\log(3/4).
\end{equation}
The trace helps precisely when $r>\log(4/3)/\log3\approx0.262$. At $r=1/2$ its value is $0.2616$ nats; at $r=1/10$ it is $-0.1778$ nats. The same trace and the same archive give opposite answers because the workload differs.

When updates are small relative to the smoothed counts, the first-order Taylor approximation $\log(1+z)\approx z$ turns \eqref{eq:exact} into $\sum_{x,y}q(x,y)c_{xy}/A_{xy}-\sum_xq_xc_x/A_x$, a weighted count added per smoothed count minus its row penalty. The experiments use the exact formula throughout, including the sparse cells where this approximation is poor.

\section{The effect of the reference weighting}\label{sec:weighting}
The two weightings of Section~\ref{sec:references} differ only in how the reference cases are pooled. Because $v_q$ is linear in $q$, the gap between the two increments of the same candidate has a closed form.

\begin{proposition}[Weighting identity]\label{prop:weighting}
Let $R$ be a finite reference set of cases with counts $C_e$ and totals $T_e\geq1$, and let $q_{\mathrm c},q_{\mathrm t}$ be as in \eqref{eq:case} and \eqref{eq:transition}. For a nonnegative count matrix $C$ put $L_C(x,y)=\log p_{N+C}(y\mid x)-\log p_N(y\mid x)$ and define the per-case benefit
\begin{equation}
 w_C(e)=\sum_{x,y}\frac{C_e(x,y)}{T_e}L_C(x,y).
\end{equation}
Then, with empirical mean and covariance taken over $R$ under the uniform distribution,
\begin{equation}\label{eq:weighting}
 v_{q_{\mathrm t}}(C\mid N)-v_{q_{\mathrm c}}(C\mid N)=\frac{\Cov_R(T,w_C)}{\E_R[T]}.
\end{equation}
\end{proposition}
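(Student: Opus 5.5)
The plan is to use linearity of $v_q$ in $q$ and write both increments as averages of the per-case benefit $w_C(e)$ under two different weightings of the cases in $R$.

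First, from \eqref{eq:utility} and the definition of $L_C$, for any reference $q$ we have $v_q(C\mid N)=\sum_{x,y}q(x,y)L_C(x,y)$. No divergence form is needed here; the identity is just the difference of two instances of \eqref{eq:utility} at the same $q$. Every entry of $L_C$ is finite because smoothing with $\alpha>0$ keeps all probabilities positive, so there are no convergence issues.

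Second, substitute \eqref{eq:case}. Exchanging the finite sums gives
\[
v_{q_{\mathrm c}}(C\mid N)=\frac1{|R|}\sum_{e\in R}\sum_{x,y}\frac{C_e(x,y)}{T_e}L_C(x,y)=\E_R[w_C].
\]
Substitute \eqref{eq:transition} and write $C_e(x,y)=T_e\cdot C_e(x,y)/T_e$, which is legitimate since $T_e\ge1$:
\[
v_{q_{\mathrm t}}(C\mid N)=\frac{\sum_{e\in R}T_e\,w_C(e)}{\sum_{e\in R}T_e}=\frac{\E_R[T\,w_C]}{\E_R[T]}.
\]
Here the factor $1/|R|$ has been inserted into both the numerator and the denominator. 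So $q_{\mathrm t}$ is the length-biased average of the same per-case benefits that $q_{\mathrm c}$ averages uniformly.

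Third, subtract the two expressions:
\[
\frac{\E_R[Tw_C]}{\E_R[T]}-\E_R[w_C]=\frac{\E_R[Tw_C]-\E_R[T]\E_R[w_C]}{\E_R[T]}=\frac{\Cov_R(T,w_C)}{\E_R[T]}.
\]
This is \eqref{eq:weighting}.

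There is no real obstacle. The only step needing care is the rewriting of $q_{\mathrm t}$ as a $T$-weighted mean of $w_C$, with the normalizations by $|R|$ consistent and the covariance taken as the population (uniform, divide-by-$|R|$) covariance rather than the sample one. I would state this convention explicitly.
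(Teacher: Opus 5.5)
Your proof is correct and follows the paper's argument. Both rest on linearity of $v_q$ in $q$ and on viewing $q_{\mathrm c}$ and $q_{\mathrm t}$ as uniform and length-biased mixtures of the normalized case counts $C_e/T_e$; the only cosmetic difference is that the paper subtracts the mixture weights $T_e/\sum_f T_f-1/|R|$ before summing against $w_C(e)$, whereas you evaluate each increment as a mean and subtract afterwards, and your divide-by-$|R|$ covariance convention matches the statement's.
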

\begin{proof}
Write $\widehat C_e=C_e/T_e$, so that $q_{\mathrm c}=|R|^{-1}\sum_e\widehat C_e$ and $q_{\mathrm t}=\sum_e(T_e/\sum_fT_f)\widehat C_e$. Since $v_q(C\mid N)=\sum_{x,y}q(x,y)L_C(x,y)$ is linear in $q$, and $\sum_{x,y}\widehat C_e(x,y)L_C(x,y)=w_C(e)$,
\begin{equation}
 v_{q_{\mathrm t}}-v_{q_{\mathrm c}}=\sum_{e\in R}\left(\frac{T_e}{\sum_fT_f}-\frac1{|R|}\right)w_C(e)
 =\frac{1}{\sum_fT_f}\sum_{e\in R}\left(T_e-\E_R[T]\right)w_C(e),
\end{equation}
and dividing numerator and denominator by $|R|$ gives \eqref{eq:weighting}.
\end{proof}

The two objectives therefore disagree about a candidate only through the covariance between how long a reference case is and how much the candidate helps it, and coincide exactly where length and benefit are uncorrelated across the reference set. The identity applies to a batch as well, since $J_q$ is also linear in $q$. The gap can have mean near zero across candidates and still spread further than the increments themselves, which is what the data show: in the earlier period the two increments of the same candidate have Spearman correlation $-0.315$, the covariance term has standard deviation $2.23$ millinats against $1.59$ for the case-weighted increment, and $33.1$ per cent of candidates change sign. The Spearman coefficient is the Pearson correlation computed on the ranks of the two scores rather than on the scores themselves, so it measures agreement of the two orderings and is invariant under any increasing transformation of either. The weighting therefore leaves the average value of a trace almost unchanged and alters which traces are valuable.

\section{Descriptor information and its limits}
\subsection{Prediction from restricted information}
For a trace of length $L$ with $m$ distinct activities, $p_a$ the relative frequency of activity $a$, $r$ the number of adjacent repeated activities, and $e$ the number of distinct directed transition pairs, we define the descriptor vector $F(d)\in\R^7$ as
\begin{equation}
 F(d)\coloneqq\left(\log(1+L),\ \log(1+m),\ -\sum_a p_a\log p_a,\
 \frac{r}{L-1},\ \frac{L-m}{L},\ \frac{e}{m},\ \max_a p_a\right).
\end{equation}
Its entries are log length, log distinct count, activity-frequency entropy, immediate-repeat fraction, revisit fraction, distinct edges per distinct activity, and maximum frequency share; no activity identities are retained. Repeated cases remain separate acquisition units even when their sequences coincide.

A regressor $f$ predicts $v_q(d\mid N)$ from $F(d)$. The archive and the reference are held fixed when its training labels are created, so their influence enters through the labels although they are not inputs to $f$. Learning such value predictors has precedents in amortized attribution \cite{covert2024}; here the target labels are exact for a specified empirical reference.

To state the information limit, take a random candidate $D$, set $V=v_q(D\mid N)$, and assume $V\in L^2$. The best squared-error predictor is the conditional expectation $g(F)=\E[V\mid F]$. For any square-integrable $f(F)$, expanding $V-f(F)=(V-g(F))+(g(F)-f(F))$ gives
\begin{equation}\label{eq:projection}
 \E[(V-f(F))^2]=\E[\Var(V\mid F)]+\E[(g(F)-f(F))^2],
\end{equation}
the cross term vanishing because $\E[V-g(F)\mid F]=0$. The first term is information destroyed by the descriptors, the second estimation error above that limit; $(a,b)$ and $(a,c)$ share descriptors but touch different cells, so no deterministic function of $F$ predicts both exactly whenever their increments differ. This standard $L^2$ identity is useful here because its first term is estimable: grouping candidates with identical descriptor signatures gives a median estimated bound on the achievable $R^2$ of $0.951$ in the earlier period, against $0.870$ reached by the fitted forest. The estimate is optimistic in two ways. It resolves only exact descriptor collisions, so candidates with close but unequal signatures contribute nothing to the estimated conditional variance, and the collision groups are small, which biases a within-group variance downward. It is also unstable across role allocations, ranging from $0.891$ to $0.965$ while the fitted $R^2$ ranges from $0.444$ to $0.885$. In every allocation the gap between the fit and the estimated bound exceeds the gap between the bound and one, so the larger share of the prediction error is estimation error rather than information lost by the descriptors, but the margin is not uniform.

Prediction accuracy and selection quality are not the same criterion, and Section~\ref{sec:ladder} shows them ordered oppositely for two regressors on this descriptor vector.

\section{From singleton values to batches}\label{sec:batch}
For a set $S$ of candidate cases the actual batch gain is
\begin{equation}\label{eq:batch}
 J_q(S\mid N)=U_q\!\left(N+\sum_{d\in S}C_d\right)-U_q(N),
\end{equation}
which is generally not $\sum_{d\in S}v_q(d\mid N)$, because each new trace changes the archive against which the others contribute. In the calculation of \eqref{eq:example} with $r=1/2$, two copies of $(x,b)$ give final probabilities $(1/2,1/2)$ and a joint gain of $\tfrac12\log(9/5)=0.2939$ nats against $0.5232$ for twice the singleton value. The opposite sign occurs in the same archive and under the same reference. With $r=1/2$, the candidates $i=(x,b)$ and $j=(x,c)$ have singleton values $0.2616$ and $-0.1194$ nats, which sum to $0.1422$, while their joint gain is $0.2067$ nats. A trace that is harmful on its own improves the batch, because it repairs the row normalization that the other distorts.

\begin{proposition}[Sign of the pairwise interaction]\label{prop:sign}
Extend $U_q$ to nonnegative real count matrices. For candidates with count matrices $C_i,C_j$, row sums $c_{ix}=\sum_yC_i(x,y)$, and any nonnegative $M$,
\begin{equation}\label{eq:mixed}
 D^2U_q(M)[C_i,C_j]
 =-\sum_{x,y}\frac{q(x,y)C_i(x,y)C_j(x,y)}{(M(x,y)+\alpha)^2}
 +\sum_x\frac{q_x c_{ix}c_{jx}}{(M_x+K\alpha)^2}.
\end{equation}
\begin{enumerate}[label=(\alph*)]
\item If $C_i$ and $C_j$ share no cell but share a row $x$ with $q_x>0$, then $D^2U_q(M)[C_i,C_j]>0$: the candidates are complements and the joint gain exceeds the sum of the individual increments.
\item If both add counts to a single common cell $(x,y)$ with $q_x>0$ and to no other cell, then the mixed derivative is nonpositive, so the candidates are substitutes, if and only if $\rho_x(y)\geq p_M(y\mid x)^2$.
\end{enumerate}
Consequently $S\mapsto J_q(S\mid N)$ is in general neither submodular nor supermodular; that is, the gain of adding a candidate to a larger set is neither always smaller nor always larger than the gain of adding it to a smaller one.
\end{proposition}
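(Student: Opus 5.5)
The plan is to compute the mixed second derivative directly from the row decomposition of $U_q$. Then I would read off the two sign statements, convert the local sign into a statement about finite differences by integrating along a segment, and finally exhibit both kinds of interaction inside the two-outcome example. Extending \eqref{eq:model} to real counts, and writing $A_{xy}=M(x,y)+\alpha$ and $A_x=M_x+K\alpha$, we have
\begin{equation*}
 U_q(M)=\sum_{x,y}q(x,y)\log A_{xy}-\sum_xq_x\log A_x ,
\end{equation*}
which is smooth on the closed nonnegative orthant because $\alpha>0$. Differentiating once in direction $C_i$ gives $\sum_{x,y}q(x,y)C_i(x,y)/A_{xy}-\sum_xq_xc_{ix}/A_x$. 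Differentiating this in direction $C_j$ gives \eqref{eq:mixed}, since $\partial A_{xy}$ picks up $C_j(x,y)$ and $\partial A_x$ picks up $c_{jx}$.

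For (a), disjoint supports make every product $C_i(x,y)C_j(x,y)$ vanish, so the first sum in \eqref{eq:mixed} is zero. The second sum is a sum of nonnegative terms, and the term for the shared row $x$ with $q_x>0$ is strictly positive, because both row sums are positive there. To obtain the claim about increments, I would use the identity
\begin{equation*}
 J_q(\{i,j\}\mid N)-v_q(C_i\mid N)-v_q(C_j\mid N)=\int_0^1\!\!\int_0^1 D^2U_q(N+sC_i+tC_j)[C_i,C_j]\,ds\,dt .
\end{equation*}
This is the fundamental theorem of calculus applied twice to $(s,t)\mapsto U_q(N+sC_i+tC_j)$. Because (a) holds at every nonnegative $M$, the integrand is strictly positive, and the joint gain exceeds the sum of the singleton increments.

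For (b), put $a=C_i(x,y)>0$ and $b=C_j(x,y)>0$; then $c_{ix}=a$ and $c_{jx}=b$. Equation \eqref{eq:mixed} reduces to
\begin{equation*}
 ab\left[-\frac{q(x,y)}{A_{xy}^2}+\frac{q_x}{A_x^2}\right].
\end{equation*}
Dividing by $abq_x/A_x^2>0$ shows that this is nonpositive if and only if $\rho_x(y)\ge (A_{xy}/A_x)^2=p_M(y\mid x)^2$. The main subtlety is that this criterion is local in $M$, since $p_M$ moves along the segment. I would therefore state (b) strictly as a property of the mixed derivative at $M$, as the proposition does. I would remark only that the finite-difference conclusion follows through the same integral whenever the inequality holds at every point $N+sC_i+tC_j$.

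For the consequence, I would take the instances already computed after \eqref{eq:batch}, with $r=1/2$, in the same archive and under the same reference; repeated cases count as separate units, so the two copies of $(x,b)$ are distinct candidates. The pair $(x,b),(x,c)$ has joint gain $0.2067>0.1422$, the sum of the singleton values. So the gain of $i$ when added to $\{j\}$ exceeds its gain when added to $\emptyset$, which violates submodularity; this also instantiates (a). Two copies of $(x,b)$ have joint gain $0.2939<0.5232$, so the gain of the second copy added to $\{i\}$ is smaller than its gain added to $\emptyset$, which violates supermodularity. This is consistent with (b), because $\rho_x(b)=1/2\ge(1/6)^2$ and the condition persists along the path, where $p\le1/2$. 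Since both violations occur for one set function, $J_q$ is in general neither submodular nor supermodular.
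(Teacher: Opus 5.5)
Your proposal is correct and follows the paper's proof: differentiate the row decomposition of $U_q$ twice, read off the signs for (a) and (b), and use the two worked examples after \eqref{eq:batch} to show that both modularity directions fail. Your double-integral identity over $N+sC_i+tC_j$ also makes explicit the finite-difference claim in (a), which the paper's proof of this proposition leaves implicit; the paper uses the same device only in the proof of Proposition~\ref{prop:bound}. Your remark that the criterion in (b) is local in $M$ is a sound refinement of the paper's phrasing.
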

\begin{proof}
Differentiating $U_q(M)=\sum_{x,y}q(x,y)[\log(M(x,y)+\alpha)-\log(M_x+K\alpha)]$ twice in the directions $C_i,C_j$ gives \eqref{eq:mixed}. For (a) the first sum is empty and the second is strictly positive. For (b) write $c_i=C_i(x,y)$, $c_j=C_j(x,y)$, $A_{xy}=M(x,y)+\alpha$ and $A_x=M_x+K\alpha$; then \eqref{eq:mixed} equals $c_ic_jq_x[A_x^{-2}-\rho_x(y)A_{xy}^{-2}]$, which is nonpositive exactly when $\rho_x(y)\geq(A_{xy}/A_x)^2=p_M(y\mid x)^2$. The two worked examples above realize both signs for one archive and one reference, so no single modularity direction can hold in general.
\end{proof}

Part (b) states that two traces reinforcing the same transition are substitutes when the reference conditional mass on that transition is at least the square of the probability the model currently assigns to it. Since $p\in[0,1]$ and therefore $p^2\leq p$, the condition is weak, and substitution is the typical case for the transitions on which a selection rule concentrates. Part (a) accounts for the opposite outcome, in which a batch is worth more than the sum of its parts. Neither effect is visible in a ranking of singleton scores, and the failure of submodularity matters because the standard approximation guarantee for greedy maximization assumes it \cite{nemhauser1978}. Accumulated over a batch, the second-order terms are bounded.

\begin{proposition}[A bound for nonadditivity]\label{prop:bound}
Fix a nonnegative archive $N$, a probability distribution $q$, and $\alpha>0$. For nonnegative candidate count matrices $C_i$ with row sums $c_{ix}$, define
\begin{equation}\label{eq:bij}
 b_{ij}=\sum_{x,y}\frac{q(x,y)C_i(x,y)C_j(x,y)}{(N(x,y)+\alpha)^2}
 +\sum_x\frac{q_x c_{ix}c_{jx}}{(N_x+K\alpha)^2}.
\end{equation}
Then for every finite candidate set $S$,
\begin{equation}\label{eq:interaction}
 \left|J_q(S\mid N)-\sum_{i\in S}v_q(i\mid N)\right|
 \leq\sum_{i<j,\ i,j\in S}b_{ij}.
\end{equation}
\end{proposition}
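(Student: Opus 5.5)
The plan is to reduce the statement to a one-dimensional inequality for the logarithm, applied cell by cell and row by row. With $C_S=\sum_{i\in S}C_i$, expanding \eqref{eq:utility} with \eqref{eq:model} gives
\begin{equation*}
 U_q(N+C_S)=\sum_{x,y}q(x,y)\log\bigl(A_{xy}+C_S(x,y)\bigr)-\sum_xq_x\log\bigl(A_x+c_{Sx}\bigr),
\end{equation*}
with $A_{xy}=N(x,y)+\alpha$ and $A_x=N_x+K\alpha$ as in \eqref{eq:exact}. Every ingredient of $J_q(S\mid N)-\sum_{i\in S}v_q(i\mid N)$ is a nonnegative weight times an expression of the form
\begin{equation*}
 \Delta(a;c_1,\dots,c_n)=\log\Bigl(a+\sum_kc_k\Bigr)-\log a-\sum_k\bigl[\log(a+c_k)-\log a\bigr],
\end{equation*}
where $a>0$ and $c_k\geq0$. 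In a cell term, $a=A_{xy}$ and $c_k=C_k(x,y)$, with weight $q(x,y)$. In a row term, $a=A_x$ and $c_k=c_{kx}$, with weight $q_x$ and a minus sign. Taking absolute values, the triangle inequality gives
\begin{equation*}
 \Bigl|J_q-\sum v_q\Bigr|\leq\sum_{x,y}q(x,y)\,|\Delta_{xy}|+\sum_xq_x\,|\Delta_x|.
\end{equation*}
This is why both sums in \eqref{eq:bij} carry a plus sign, unlike \eqref{eq:mixed}: the cell and row contributions may partly cancel, but the bound does not use that.

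The key lemma is $|\Delta(a;c_1,\dots,c_n)|\leq\sum_{i<j}c_ic_j/a^2$. To prove it, order the candidates arbitrarily and add them one at a time. Write $s_k=\sum_{i<k}c_i$ for the total already added before candidate $k$. Telescoping gives $\Delta=\sum_k\delta_k$, where
\begin{equation*}
 \delta_k=\bigl[\log(a+s_k+c_k)-\log(a+s_k)\bigr]-\bigl[\log(a+c_k)-\log a\bigr]=\int_0^{c_k}\!\!\int_0^{s_k}f''(a+u+t)\,dt\,du,
\end{equation*}
with $f=\log$. Since $|f''(z)|=z^{-2}\leq a^{-2}$ for $z\geq a$, we get $|\delta_k|\leq c_ks_k/a^2=\sum_{i<k}c_ic_k/a^2$. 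Summing over $k$ yields exactly the pairwise sum. Because $f''<0$, every $\delta_k\leq0$, so no cancellation is lost within a single cell or row.

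Inserting the lemma with the weights gives
\begin{equation*}
 \sum_{x,y}q(x,y)\sum_{i<j}\frac{C_i(x,y)C_j(x,y)}{A_{xy}^2}+\sum_xq_x\sum_{i<j}\frac{c_{ix}c_{jx}}{A_x^2}.
\end{equation*}
Exchanging the finite sums, this equals $\sum_{i<j}b_{ij}$, which proves \eqref{eq:interaction}. The only delicate step is the double-integral representation together with the observation that the curvature is evaluated at arguments $\geq a$, so that the bound uses the archive's smoothed counts $A_{xy}$ and $A_x$ rather than the larger intermediate counts. Everything else is bookkeeping. Rows with $q_x=0$ contribute nothing, and the bound holds regardless of the order chosen for $S$.
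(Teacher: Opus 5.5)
Your proof is correct and is essentially the paper's argument. The paper also orders $S$ arbitrarily, telescopes the joint gain into successive additions, writes each interaction as an integral of the second derivative over a rectangle, and bounds the curvature by its value at $N$, since every intermediate archive dominates $N$ entrywise. The only difference is that you split $U_q$ into one-dimensional cell and row logarithms before integrating, whereas the paper integrates the matrix mixed derivative \eqref{eq:mixed} and applies the triangle inequality to it at $M\geq N$, which yields the same $b_{ij}$.
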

\begin{proof}
Whenever $M\geq N$ entrywise, the triangle inequality applied to \eqref{eq:mixed} bounds $|D^2U_q(M)[C_i,C_j]|$ by $b_{ij}$. Order $S$ arbitrarily. The joint gain telescopes into successive marginal increments. For the $j$th addition, compare its increment at $N+\sum_{i<j}C_i$ with its increment at $N$ and integrate the mixed derivative over the unit square, with one direction $\sum_{i<j}C_i$ and the other $C_j$. Bilinearity and the derivative bound give an absolute difference of at most $\sum_{i<j}b_{ij}$. Summing over additions proves the claim.
\end{proof}

The bound holds for a fixed batch under either budget rule. It is not an optimality or regret guarantee for ranking by individual scores under a transition constraint, and small smoothed counts make it loose; Section~\ref{sec:interactionresults} reports how loose. The general gap between attribution and subset selection is discussed in \cite{wang2024}.

\section{Data and experimental design}\label{sec:design}
\subsection{Sources, roles, and evaluation}
BPI Challenge 2012 records loan-application activity at a Dutch financial institution \cite{dongen2012}. The released log has 13,087 cases and 262,200 events. We retain the 164,506 events marked \texttt{COMPLETE}, spanning 23 activity names. Events with equal timestamps keep their source order. Cases crossing the 60th percentile of case start times are removed: 7,097 cases remain in the earlier period, 5,235 in the later period, and 755 are purged.

Whole cases receive disjoint roles. Of the earlier cases, 1,419 form a base pool, 2,129 train the value regressor, 2,129 are acquisition candidates, and 710 each supply calibration and test references. Later cases split into 2,617 calibration and 2,618 test cases. The vocabulary is taken from the whole base pool with one extra unknown label, so $K=24$; the predictor's counts use only the first 128 base cases. The smoothing constant is $\alpha=1/2$.

Labels for the value regressor use the earlier calibration reference. Two regressors are fitted to the same seven descriptors and the same labels: a random forest, an average of 100 regression trees fitted to bootstrap resamples with maximum depth 8 and minimum leaf size 5, and a ridge regression with penalty 1 on standardized inputs. Both are frozen after fitting and reused for both periods. They differ only in functional form, so a difference between them is a property of the fitted regressor and not of the information the descriptors carry. Test references supply the reported losses and the true increments, never training labels. Each reference weighting gets its own calibration labels and its own pair of regressors, so the weighting comparison evaluates corresponding pipelines rather than one ranking under two metrics.

\subsection{Acquisition budgets}
At a case budget of 20 per cent, a ranking selects its first $\lfloor0.2\cdot2129\rfloor=425$ cases. At a transition threshold of 20 per cent, it selects the shortest prefix whose cumulative transition count reaches 20 per cent of the candidate pool's transitions; the final whole case may exceed the threshold. Random selection uses uniform subsets for case budgets and uniform random permutations with the same stopping rule for transition thresholds.

The distinction changes the comparison. At the primary allocation under case weighting, the transition rule selects 1,162 forest-ranked cases containing 4,381 transitions against 425.8 cases and 4,389.6 transitions for the random rule: matched transition volume, very different case counts. At a fixed 425-case budget, forest selection instead receives 1,355 transitions against 4,357.9. The two units encode different cost assumptions, and neither is a correction of the other.

The budget restricts additions to the count model and excludes the reference data, the training of the regressor, and the inspection of candidate descriptors. The main comparison uses 199 random batches at the primary allocation and 39 at each of four further allocations, all reusing the same log and temporal boundary, so their ranges measure sensitivity to role assignment rather than uncertainty across organizations. Shortest-first and fewest-distinct-activities rankings serve as inexpensive baselines.

\note{\textbf{Analysis status.} Case weighting with a case budget was specified in a locally dated analysis plan; it was not externally registered. The transition-budget and transition-weighted analyses were added after seeing the initial results, as were the ceiling, interaction, and sequential-rule analyses reported in Sections~\ref{sec:ceilingresults} to~\ref{sec:interactionresults}. The present focus on BPI is likewise retrospective. Both regressors were fitted in the same run and neither was dropped, but the decision to report the ridge comparison prominently was made after its results were seen, so the contrast of Section~\ref{sec:ladder} is an observation to be replicated under a regressor fixed in advance, not a test.}

\section{Results}
\subsection{Selection under the two weightings and budgets}\label{sec:selectionresults}
Under case weighting and the transition threshold, the forest batch at the primary allocation improves log loss over the mean random batch by 65.2 millinats in the earlier period and 26.0 in the later period; the gains over the unchanged archive are 171.6 and 106.4 millinats for the forest and for random selection in the earlier period. The comparison is less decisive against simple rules. Shortest-first gains 50.4 millinats over random in the earlier period and 13.8 in the later period, fewest-distinct 43.2 and 0.7, and across five allocations shortest-first has an earlier-period median advantage of 39.2 millinats against 39.6 for the forest. The favorable example does not establish that the forest beats an inexpensive length rule.

Two controls bound the alternative reading that the transition threshold rewards any rule preferring short cases. Ranking by descending length selects 115 cases and loses 216.0 millinats to random selection in the earlier period; a forest fitted to randomly permuted labels selects 222 cases and loses 244.9. Conversely, more cases do not imply a larger gain: at a matched transition volume shortest-first takes the most cases of any rule considered, a median of 1{,}328 against 1{,}138 for the forest and 1{,}266 for the ridge regression, and gains the least of the three. The case count is therefore not a sufficient statistic for the outcome.

\begin{table}[tb]
\centering
\small\setlength{\tabcolsep}{4pt}
\begin{tabular}{@{}llrr@{}}
\toprule
Reference weight & Budget unit & Earlier period & Later period\\
\midrule
Cases & Cases & +9.0 [-6.9, +14.3] & -22.5 [-44.2, -13.8] \\
Cases & Transitions & +39.6 [-16.4, +65.2] & +4.7 [-26.8, +36.5] \\
Transitions & Cases & -84.5 [-126.2, -63.1] & -56.1 [-82.4, -29.4] \\
Transitions & Transitions & -64.0 [-84.0, -58.0] & -41.9 [-48.0, -30.8] \\
\bottomrule
\end{tabular}

\caption{Forest gain over random in millinats: median [minimum, maximum] over five role allocations at the 20 per cent acquisition setting.}\label{tab:robustness}
\end{table}

Figure~\ref{fig:selection} shows the spread across allocations. Under transition weighting both descriptor regressors lose to random selection in all five allocations at both budget units. Section~\ref{sec:ceilingresults} shows how much was available to be won in each configuration, and Section~\ref{sec:ladder} separates the contribution of the fitted score from that of the selection principle.

\subsection{Attainable gain and acquisition efficiency}\label{sec:ceilingresults}
Corollary~\ref{cor:ceiling} makes the magnitude of these differences measurable. At the primary allocation under case weighting, the initial loss on the earlier test reference is $\ell_q(N_0)=0.944$ nats, of which the reference's own conditional entropy $\bar H_q=0.704$ nats cannot be removed by any amount of data. The attainable gain is $\Lambda_q(N_0)=0.240$ nats, a quarter of the loss. Measured against that ceiling, the 65.2 millinat advantage of the forest over random selection is 27 per cent. These three figures are for the primary allocation; Table~\ref{tab:ceiling} reports medians over all five, where the ceiling is 230 millinats.

\begin{table}[tb]
\centering
\footnotesize\setlength{\tabcolsep}{4pt}
\begin{tabular}{@{}llrrrrrrrrr@{}}
\toprule
& & \multicolumn{3}{c}{Decomposition of the initial loss} & \multicolumn{6}{c}{Share of the ceiling realized (\%)}\\
\cmidrule(lr){3-5}\cmidrule(lr){6-11}
Weight & Period & $\ell_q(N_0)$ & $\bar H_q$ & $\Lambda_q$ & Random & Forest & Ridge & Seq. & Oracle & All\\
\midrule
Cases & Earlier & 0.928 & 0.698 & 230 & 43 & 61 & 69 & 82 & 86 & 55 \\
Cases & Later & 0.957 & 0.753 & 204 & 48 & 54 & 65 & 81 & 85 & 61 \\
Transitions & Earlier & 1.156 & 0.991 & 165 & 70 & 30 & 30 & 70 & 79 & 89 \\
Transitions & Later & 1.168 & 1.002 & 166 & 71 & 44 & 47 & 71 & 79 & 90 \\
\bottomrule
\end{tabular}

\caption{Loss decomposition \eqref{eq:lossdecomp} and acquisition efficiency at the 20 per cent transition budget, medians over five role allocations; $\ell_q(N_0)$ and $\bar H_q$ in nats, $\Lambda_q$ in millinats. The last six columns give $\eta_q$ in per cent for random selection, the two descriptor regressors, the sequential calibration rule, the sequential test-score oracle, and the whole candidate pool added without any budget. $\Lambda_q$ bounds arbitrary count additions, not budgeted ones, so these shares are not efficiencies against an attainable optimum.}\label{tab:ceiling}
\end{table}

Table~\ref{tab:ceiling} reports the decomposition for all four configurations. Two observations follow.

The first concerns the room available to any selection rule. Conditional entropy accounts for 75 per cent of the loss under case weighting and 86 per cent under transition weighting, which leaves median ceilings of 230 and 165 millinats. Under case weighting random selection realizes 43 per cent of that ceiling, leaving most of it open to a rule; under transition weighting it already realizes 70 per cent of a smaller ceiling, leaving little. This is a property of the objective rather than of any estimator.

The second concerns unrestricted acquisition. Under case weighting, adding the whole candidate pool of 21{,}936 transitions without any budget realizes 55 per cent of the ceiling in the earlier period, less than the 61 per cent that the forest reaches with a fifth of them. Under transition weighting the ordering reverses and the unrestricted pool realizes 89 per cent. Equation~\eqref{eq:vkl} accounts for this. Under case weighting a median of 35 per cent of candidates have a negative increment and move the row predictors away from the reference conditionals, against 4 per cent under transition weighting. The pool figure also shows that $\Lambda_q$ is not approached by the data available here: no budget-free addition of the entire pool comes within 40 per cent of it under case weighting.

\subsection{A comparison of selection rules}\label{sec:ladder}
The descriptor forest is one point on a scale from random selection to a rule that sees the evaluation reference itself. Table~\ref{tab:ladder} places six further rules on that scale, at the same budget and against the same random comparators. Static rules rank candidates once; sequential rules recompute the exact increment of every remaining candidate after each addition and take the largest increment per transition, which costs about $2.4$ million exact marginal evaluations for one weighting, scorer, and allocation.

\begin{table}[tb]
\centering
\footnotesize\setlength{\tabcolsep}{3pt}
\begin{tabular}{@{}lrrrr@{}}
\toprule
& \multicolumn{2}{c}{Case weighting} & \multicolumn{2}{c}{Transition weighting}\\
\cmidrule(lr){2-3}\cmidrule(lr){4-5}
Selection rule & Earlier & Later & Earlier & Later\\
\midrule
Descriptor forest & +40 [-16, +65] & +5 [-27, +36] & -64 [-84, -58] & -42 [-48, -31] \\
Descriptor ridge & +60 [+42, +79] & +30 [+19, +51] & -65 [-75, -52] & -38 [-45, -31] \\
Shortest first & +39 [+7, +50] & +10 [-25, +19] & -160 [-216, -147] & -169 [-229, -163] \\
Calibration score, static & +57 [+7, +72] & +28 [-10, +41] & -64 [-96, -57] & -41 [-61, -40] \\
Calibration score, sequential & +93 [+72, +95] & +63 [+55, +68] & +1 [-4, +2] & +3 [-2, +4] \\
Test-score oracle, sequential & +102 [+84, +105] & +72 [+64, +76] & +14 [+13, +17] & +15 [+13, +16] \\
\midrule
Whole pool, no budget & +27 [+20, +30] & +27 [+20, +30] & +32 [+29, +33] & +32 [+31, +32] \\
\bottomrule
\end{tabular}

\caption{Gain over the mean random batch in whole millinats at the 20 per cent transition budget: median [minimum, maximum] over five role allocations. Calibration-scored rules use earlier-period calibration data only; the oracle uses the test reference of the period it is evaluated on and is not implementable. The last row respects no budget.}\label{tab:ladder}
\end{table}
\begin{figure}[tb]
\centering
\includegraphics[width=\linewidth]{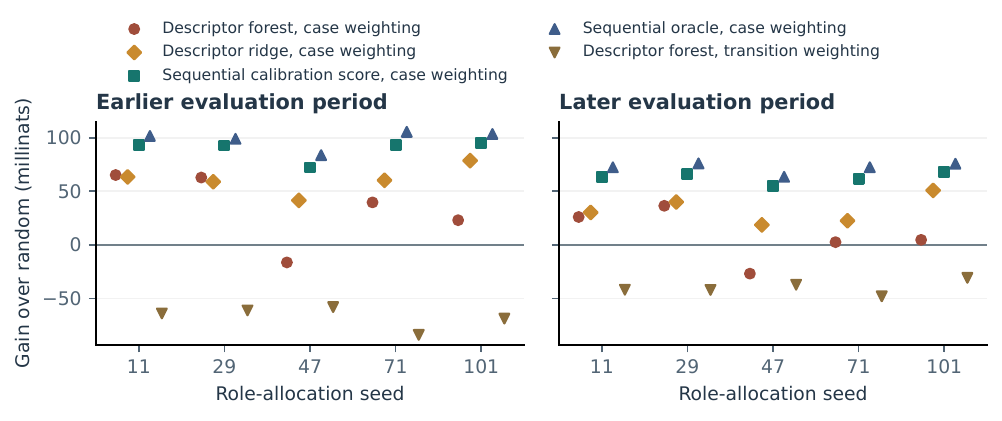}
\caption{BPI 2012 at the 20 per cent transition budget. Each point is one role allocation; positive values favor the rule over the corresponding mean random batch. The descriptor forest crosses zero under case weighting and is uniformly negative under transition weighting. The ridge regression on the same descriptors and the sequential calibration rule stay positive in every allocation, the latter within 13 millinats of the oracle.}\label{fig:selection}
\end{figure}

Under case weighting, the instability of Table~\ref{tab:robustness} belongs to the fitted forest and not to the descriptors. A ridge regression on the same seven descriptors, the same labels, and the same training cases is positive in all ten allocation-period cells and exceeds shortest-first at the median in both periods, neither of which the forest does. The forest collapses at one allocation, where its fitted $R^2$ falls to $0.444$ against a median of $0.870$; the ridge fit has no comparable outlier. Replacing one estimator of the same target by another therefore changes the qualitative conclusion, which is a reason to fix the estimator in advance rather than evidence that either form is preferable.

Accuracy on individual increments does not order the rules by the quality of the batches they select. The forest predicts single-trace values far better than the ridge regression in both periods, median $R^2$ $0.870$ against $0.623$ and median Spearman correlation $0.927$ against $0.726$ in the earlier period, yet realizes the smaller share of the ceiling in Table~\ref{tab:ceiling}. The two batches behave alike at the primary allocation, concentrating comparable mass in their five largest cells and realizing comparable fractions of the sum of their singleton scores. What differs is the variance of the fitted score across allocations.

Exact increments close most of the remaining gap. The sequential calibration rule is positive in all five allocations and both periods, and the oracle adds a further 9 millinats at the median and at most 13 in any allocation, so the implementable rule reaches 82 per cent of the ceiling against 86 for the oracle.

Under transition weighting, the objective leaves little room for any rule. The oracle gains only 14.4 and 14.9 millinats in the two periods, the sequential calibration rule is indistinguishable from random selection, and both descriptor regressors are negative, so the finding does not depend on the choice between them. The negative entries of Table~\ref{tab:robustness} show descriptor scores misranking candidates for an objective whose weighting differs from the one their labels were computed under, as quantified by Proposition~\ref{prop:weighting}, while little is available to win even with perfect information.

The sequential rules require the candidate's own count matrix and therefore assume that a candidate can be inspected before it is acquired, which is the case the descriptor route is intended to avoid; Table~\ref{tab:ladder} compares selection quality, not total cost.

\subsection{Batch interaction}\label{sec:interactionresults}
Proposition~\ref{prop:sign} predicts that concentrated batches lose value to substitution. The measurement confirms this and gives its magnitude. At the primary allocation under case weighting and the transition budget, the forest batch of 1,162 cases has a joint gain of 171.6 millinats against a sum of individual increments of 2,516.4 millinats, which is 6.8 per cent of the sum of the singleton scores. One random comparator batch of 400 cases and comparable transition volume realizes 29.1 per cent, or 90.9 against 312.5 millinats; the 106.4 millinats quoted in Section~\ref{sec:selectionresults} is the mean over the 199 random batches, not this one. The medians over five allocations are 5.6 and 27.6 per cent. The ridge batch behaves like the forest batch, at 7.1 per cent.

The mechanism is visible in the counts. The forest batch spreads its 4,381 added transitions over 85 distinct cells with 73.8 per cent of the mass in the five largest, while the random batch covers 117 cells with 36.6 per cent in the five largest. Both descriptor rules select many short, similar traces, and by Proposition~\ref{prop:sign}(b) these are substitutes wherever the reference conditional mass exceeds the squared current probability, which holds for the cells they concentrate on. Selection by singleton score is therefore systematically optimistic, and the two descriptor rules are equally affected despite their different ranking accuracy, which is consistent with the interaction being a property of the concentration of the batch rather than of the score that produced it. The bound of Proposition~\ref{prop:bound} is valid but uninformative at this scale, since it permits 80.0 nats of nonadditivity against the 2.34 observed. Its content is qualitative. The departure from additivity is governed by pairwise products of counts weighted by inverse squared smoothed counts, which is why a batch concentrated on few sparse cells departs furthest.

\subsection{Stability across evaluation periods}\label{sec:transfer}
A ranking computed on one reference is then used on another. Since $v_q(C\mid N)=\sum_{x,y}q(x,y)L_C(x,y)$ is linear in the reference and $q-q'$ sums to zero, subtracting the midrange of $L_C$ and applying H\"older's inequality gives $|v_q(C\mid N)-v_{q'}(C\mid N)|\leq\TV(q,q')\,\osc(L_C)$, where $\osc$ is the range of $L_C$ over cells; the same argument applied to $L_{C_i}-L_{C_j}$ shows that a pair keeps its order whenever its margin exceeds $\TV(q,q')\,\osc(L_{C_i}-L_{C_j})$. At the total variation of $0.0807$ between the two test references under case weighting, the bound certifies none of the 2,128 adjacent pairs of the earlier-period ranking, because sparse cells make $\osc$ large. The realized instability is much smaller than the bound admits, in that 400 adjacent pairs, or 18.8 per cent, actually reverse. In this setting the transfer across periods has to be measured rather than bounded.

\section{Interpretation and limitations}
Under a case-weighted objective and a transition constraint, selection by trace value is worth roughly a quarter of the attainable gain over random selection, and most of that is reachable with exact increments computed on calibration data. The result does not extend to descriptor scores as a class. The fitted forest is unstable across role allocations and not clearly better than shortest-first, while a ridge regression on the same descriptors is positive in all ten allocation-period cells and beats shortest-first by about 20 millinats at the median, despite predicting individual increments considerably less accurately. The forest result therefore does not refute the descriptor route, and neither descriptor result should be relied on, because the choice between two estimators fitted in the same run changes the sign of the conclusion and was made after the outcomes were known. A replication should fix one regressor in advance and compare four arms, namely random selection, shortest-first, that regressor, and exact sequential scoring.

At the primary allocation none of the $B=199$ random transition-budget batches matches the forest batch in either period, so the Monte Carlo tail probability $p_{\mathrm{MC}}=(1+\#\{b:J_q(S_b\mid N)\geq J_q(S_{\mathrm{forest}}\mid N)\})/(B+1)$ equals $0.005$, the smallest attainable value at this simulation size. In this randomization test the null distribution is generated by the random-selection procedure rather than assumed, and the added unit in numerator and denominator keeps the test valid at finite $B$, so the value describes an extreme result relative to that procedure, conditional on the split, the candidate pool, and the test reference. Holm adjustment \cite{holm1979}, a step-down correction that controls the probability of any false rejection across a family of tests, raises it to 0.020 within the family of four dataset-period comparisons that the analysis plan specified. That correction does not account for the subsequent choice of a favorable weighting, budget unit, or case study, and none of the four originally planned case-budget comparisons across both logs passed the planned threshold. The configuration is therefore treated as exploratory.

The learner uses only the immediately preceding activity, so it models no long-range dependence, predicts no loan decision, and optimizes no interactive agent; agent-trajectory curation studies different objectives and models \cite{zheng2026}. The transition count is one acquisition cost among several, since privacy review, storage, and labeling carry case-level costs, and the sequential rules carry a scoring cost that the budget does not charge them.

Three sources of error act at once. An estimated score can differ from the exact increment, a singleton score computed exactly on a calibration reference can differ from the increment on a test reference, and even exact test-reference scores need not select the best joint batch, because of the interactions of Section~\ref{sec:interactionresults}. Moving from static to sequential exact scoring in Table~\ref{tab:ladder} isolates the last of these, worth 36 millinats in the earlier period.

The five allocations overlap, use different numbers of random comparator batches, and come from one historical process, so they bound sensitivity to role assignment and nothing wider. The ceiling inherits the sampling error of the empirical reference it is computed from, and no interval is reported for it. A replication should also account for scoring costs and assess the final-case overshoot of the transition rule.

\section{Conclusion}
The exact increment separates the trace, the archive, and the reference distribution, and its divergence form bounds what any acquisition can achieve. On BPI 2012 that bound is a quarter of the initial loss. Random selection realizes just under half of it, and a sequential rule using exact calibration increments realizes most of it while remaining positive across role allocations. Descriptor ranking captures part of the same gain, and how much depends on which regressor is fitted to the same descriptors and labels, with the more accurate predictor of individual increments selecting the worse batches. Individual-score correlation is therefore insufficient evidence in either direction. A batch must be judged under the intended objective and acquisition rule, and against the gain that was available.

\renewcommand{\refname}{References}
{\footnotesize
\bibliographystyle{alphaurl}
\bibliography{references}
}
\end{document}